\definecolor{cblue}{rgb}{0.21,0.49,0.74}
\definecolor{pos}{rgb}{0.57,0.73,0.93}
\definecolor{neg}{rgb}{0.93,0.74,0.78}
\theoremstyle{plain}
\newtheorem{theorem}{Theorem}[section]
\theoremstyle{definition}
\theoremstyle{remark}
\title{MHA2MLA-VLM: Enabling DeepSeek's Economical Multi-Head Latent Attention across Vision-Language Models}
\let\AAAI@orig@copyright@text\copyright@text
\renewcommand{\copyright@text}{%
  \begingroup
    \footnotesize
    \def\@thefnmark{\faEnvelope}%
    \@makefntext{Corresponding authors are Tao Ji and Tao Gui.}%
    \par
  \endgroup
  \AAAI@orig@copyright@text
}
\author{
Xiaoran Fan\textsuperscript{1}\thanks{~Equal contribution.}, 
Zhichao Sun\textsuperscript{1}\footnotemark[1],
Tao Ji\textsuperscript{1}\footnotemark[1],
Lixing Shen\textsuperscript{2},
\bf Tao Gui\textsuperscript{1,3,4}
}
\begin{document}

\maketitle


\begin{abstract}


As vision-language models (VLMs) tackle increasingly complex and multimodal tasks, the rapid growth of Key-Value (KV) cache imposes significant memory and computational bottlenecks during inference. 
While Multi-Head Latent Attention (MLA) offers an effective means to compress the KV cache and accelerate inference, adapting existing VLMs to the MLA architecture without costly pretraining remains largely unexplored. 
In this work, we present \textbf{MHA2MLA-VLM}, a parameter-efficient and multimodal-aware framework for converting off-the-shelf VLMs to MLA. 
Our approach features two core techniques: (1) a modality-adaptive partial-RoPE strategy that supports both traditional and multimodal settings by selectively masking nonessential dimensions, and (2) a modality-decoupled low-rank approximation method that independently compresses the visual and textual KV spaces. 
Furthermore, we introduce parameter-efficient fine-tuning to minimize adaptation cost and demonstrate that minimizing output activation error, rather than parameter distance, substantially reduces performance loss. 
Extensive experiments on three representative VLMs show that MHA2MLA-VLM restores original model performance with minimal supervised data, significantly reduces KV cache footprint, and integrates seamlessly with KV quantization. 
\end{abstract}

\begin{links}
\link{Code \& Appendix}
{https://github.com/JT-Ushio/MHA2MLA-VLM}
\end{links}

\section{Introduction}
\label{sec:intro}

The Key-Value (KV) cache stores the complete contextual information required by large language models (LLMs), enabling efficient and accurate decoding of the current token. 
As the tasks handled by LLMs become increasingly complex (e.g. multimodal tasks~\cite{bordes2024introductionvisionlanguagemodeling} and deep thinking~\cite{pan2025surveyslowthinkingbasedreasoning}), the context length correspondingly increases. 
This results in a rapid expansion of the KV cache, which not only occupies large GPU memory but also leads to severe memory access bottlenecks due to the quadratic complexity of the standard attention mechanism~\cite{DBLP:conf/alt/KelesWH23}. 
Consequently, efficient inference in LLMs, especially in vision-language models (VLMs) with multimodal contexts, urgently requires cost-effective KV cache management and attention architectures.

A series of studies have identified redundancies in the KV cache~\cite{DBLP:journals/tmlr/0002LTTXCHD0025}. 
In terms of sequence length~\cite{DBLP:conf/nips/Zhang00CZC0TRBW23,DBLP:conf/emnlp/OrenHNA024}, KV cache pruning removes irrelevant tokens from the cache. 
Regarding representation precision~\cite{badri2023hqq}, KV cache quantization reduces the precision of vector representations. 
In the vector dimension, modifications such as Grouped/Multi-Query Attention (GQA and MQA) restructure the attention mechanism by enabling a single KV pair to be shared among a group of queries~\cite{emnlp/AinslieLJZLS23,shazeer2019fasttransformerdecodingwritehead}.

DeepSeek introduced Multi-Head Latent Attention (MLA), an advanced attention mechanism employing low-rank key-value joint compression~\cite{deepseekai2024deepseekv2strongeconomicalefficient}. 
Empirical results show that MLA outperforms standard Multi-Head Attention (MHA, \citeyear{c:22}) and its variants, while significantly reducing the KV cache size during inference, thereby enhancing inference efficiency.
\citeauthor{DBLP:conf/acl/JiGWGSCQZG25}~\shortcite{DBLP:conf/acl/JiGWGSCQZG25} proposed MHA2MLA, demonstrating that LLMs originally trained with MHA/GQA can be adapted to leverage MLA during inference. 
However, whether VLMs can undergo a similar transition to the MLA architecture remains an open question.

\begin{figure*}[t]
    \centering
    \includegraphics[width=1.0\linewidth]{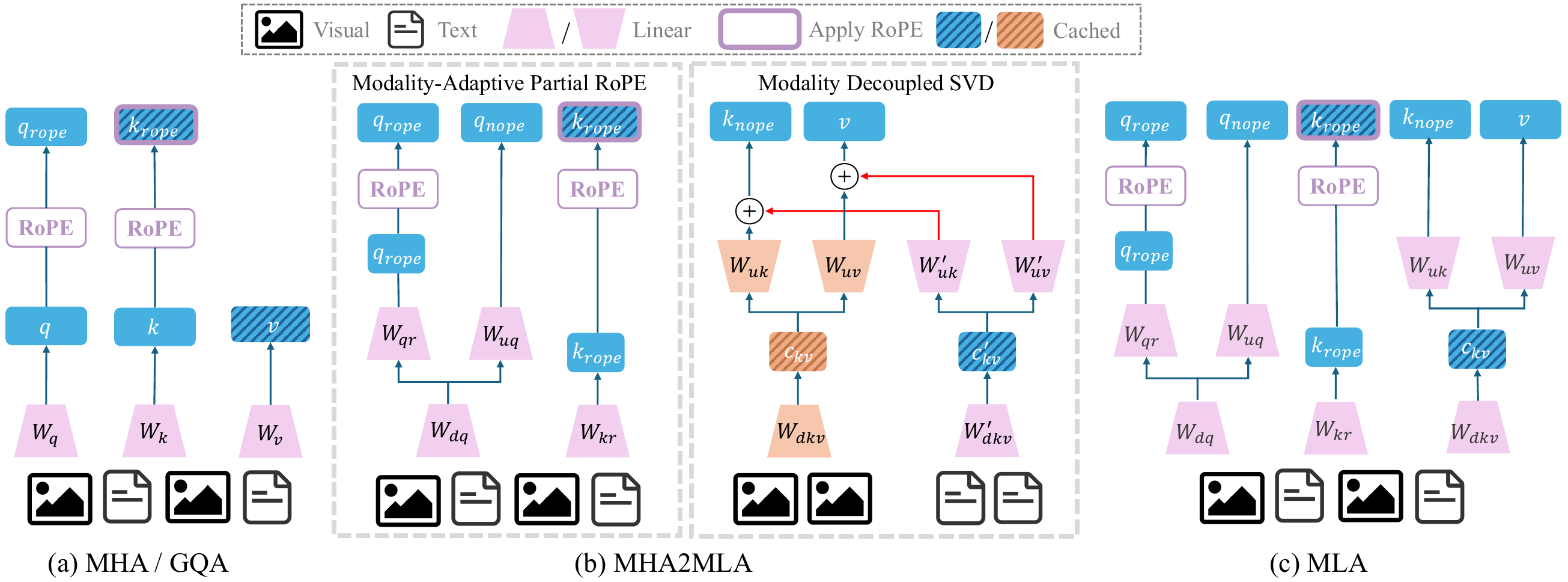}
    \caption{
The overview process of converting VLMs from MHA/GQA to MLA using MHA2MLA-VLM. Our method makes the attention inputs match MLA exactly, and low rank compression of the KV cache is consistent with MLA. The modality-decoupled design reduces truncation loss and maximizes the parameter reuse of pretrained weights.
}
    \label{fig:fig_main}
\end{figure*}

MHA2MLA involves two key steps: partial-rope conversion and KV joint low-rank approximation. 
For partial-rope, text-only LLMs have demonstrated that significantly reducing (e.g., -87.5\%) less important rotary frequencies requires only minimal fine-tuning to recover performance. 
In the case of VLMs, it is necessary to verify whether the retained rotary frequencies are equally effective for both image and text tokens. 
We address this question and further extend the method to multimodal rope (e.g., used in Qwen2-VL series). 

For KV joint low-rank approximation, inspired by SVDLLM V2~\cite{DBLP:conf/naacl/WangAWSZ25}, we improve the approximation from being applied to parameters ($\min||W - W^\prime||_F$, where $W^\prime$ is the low-rank approximation) to output activations ($\min||XW - XW^\prime||_F$). 
This enhancement significantly reduces performance degradation and the amount of fine-tuning data required. 
Moreover, we observe that the low-rank spaces of image and text tokens are orthogonal, necessitating separate low-rank approximations for each modality.

To reduce the cost of MHA2MLA-VLM adaptation, we introduce parameter-efficient fine-tuning (PEFT, \citeyear{xu2023parameterefficientfinetuningmethodspretrained}). 
During the partial-rope phase, only the two projection matrices for query and key are fine-tuned, while all other parameters are frozen. 
For the low-rank approximation phase, only the parameters within MLA are fine-tuned. 
It reduces the time required by 59\% (e.g., the MHA2MLA-VLM of Qwen2.5-VL is shortened from 22 hours to 9 hours). 
We validate the effectiveness of MHA2MLA-VLM on three representative models: LLaVA-1.5~\shortcite{DBLP:conf/cvpr/LiuLLL24}, LLaVA-NeXT~\shortcite{liu2024llavanext}, and Qwen2.5-VL~\shortcite{bai2025qwen25vltechnicalreport}. 
Furthermore, using LLaVA-NeXT, we demonstrate that MLA outperforms the KV cache pruning baseline and integrates seamlessly with KV quantization.

Our main contributions are:
\begin{itemize}[leftmargin=*,itemsep=0pt, topsep=0pt, parsep=0pt]
\item We successfully extend the MHA2MLA adaptation from text-only LLMs to VLMs, designing multimodal partial-rope and low-rank approximation algorithms.
\item By incorporating SVDLLM V2's minimization of output activation error and introducing PEFT, we significantly reduce the performance degradation and fine-tuning cost.
\item We demonstrate the effectiveness of MHA2MLA-VLM in three main VLMs with distinct architectures and demonstrate that it integrates seamlessly with KV quantization.
\end{itemize}
\section{MHA2MLA-VLM}
\label{sec:method}

Figure~\ref{fig:fig_main} provides an overview of MHA2MLA-VLM, we will describe the details of the two main components of MHA2MLA-VLM: (1) a modality-adaptive partial-RoPE strategy that supports both traditional and multimodal settings by selectively masking nonessential dimensions, and (2) a modality-decoupled low-rank approximation method that independently compresses the visual and textual KV spaces.

\subsection{Multimodal Partial-RoPE}
\label{ssec:mm_partial_rope}


To enable efficient migration of MHA/GQA-based VLMs to MLA, we introduce Multimodal Adaptive Partial-RoPE, which adaptively retains the most informative rotary dimensions according to the nature of the input from different modalities, achieving efficient architecture migration.

Current research on partial-RoPE has been limited to unimodal settings. For example, studies such as~\cite{gpt-neo,barbero2025roundroundgomakes} trained partial-RoPE models from scratch, achieving slightly better perplexity compared to full-RoPE. 
More recent work~\cite{DBLP:conf/acl/JiGWGSCQZG25} has explored adapting pre-trained full-RoPE models to partial-RoPE without costly retraining. However, their analyzes are strictly limited to LLM. 
In multimodal scenarios, for the input, visual and textual information is interleaved; For the VLMs' forward calculation, visual and text information are jointly entangled in the RoPE dimension. Simply applying the text-based partial RoPE strategy leads to suboptimal allocation of the retention frequency subspace, as visual and textual information exhibit distinct dimensional characteristics are ignored.
To overcome these limitations, retain the most informative rotation dimension based on modality-awareness, thereby enabling low-cost and efficient architectural transfer from MHA/GQA to MLA.

\paragraph{Full Vanilla RoPE} is a mechanism~\cite{DBLP:journals/ijon/SuALPBL24} for encoding positional information into queries and keys through frequency-specific rotations. Formally, given a query \(\bm{q}_i \in \mathbb{R}^{d_h}\) and a key \(\bm{k}_i \in \mathbb{R}^{d_h}\), we split them into 2D chunks:  
\[
\bm{q}_i,\bm{k}_i = \left[\bm{q}_i^{[2k, 2k+1]}\right]_{0 \leq k < \frac{d_h}{2}}, \left[\bm{k}_i^{[2k, 2k+1]}\right]_{0 \leq k < \frac{d_h}{2}}.
\]  
Formally, for each 2D chunk \( \bm{q}_i^{[2k, 2k+1]} \) and \( \bm{k}_i^{[2k, 2k+1]} \), the rotation matrix at position \( i \) is defined as: 
\[
\bm{R}_i^{[2k, 2k+1]}(\theta_k) = 
\begin{bmatrix} 
\cos(i\theta_k) & -\sin(i\theta_k) \\ 
\sin(i\theta_k) & \cos(i\theta_k) 
\end{bmatrix},
\] 
where $\theta_{k} = \beta^{-2k/{d_h}}$ is the frequency of rotation applied to a specific $k$-th pair of  $\mathcal{G}_d\in[0, \tfrac{d_h}{2})$, and $\beta$ is the frequency base wavelength.
The vanilla RoPE defines a matrix $\bm{A}_{t_i,t_j}$ that represents the relative positional encoding between two positions $t_i$ and $t_j$ in a 1D sequence:

\[
\bm{A}_{t_i,t_j}=\left(\bm{q}_{t_i}\bm{R}_{t_i}\right){\left(\bm{k}_{t_j}\bm{R}_{t_j}\right)}^\top
= \bm{q}_{t_i}\bm{R}_{\Delta t}\bm{k}_{t_j}^\top,
\]

where $\Delta t=t_i-t_j$, $\bm{q}_{t_i}$ and $\bm{k}_{t_j}$ are query and key vectors at positions $t_i$ and $t_j$, the relative rotation matrix is $\bm{R}_{\Delta t}$.

\paragraph{Full Multimodal RoPE}


There are two common approaches to extending RoPE to multimodal LLMs. One approach directly applies standard RoPE by flattening the visual tokens and treating both text and visual tokens as a single 1D sequence~\cite{DBLP:conf/cvpr/LiuLLL24,zhu2025internvl3exploringadvancedtraining,bai2023qwenvlversatilevisionlanguagemodel}. The other considers the unique characteristics of each modality and extends RoPE to multimodal scenarios, such as M-RoPE~\cite{wang2024qwen2vlenhancingvisionlanguagemodels,bai2025qwen25vltechnicalreport,wei2025videoropemakesgoodvideo}.

Unlike the vanilla 1D-RoPE in LLMs, which
is limited to encoding one-dimensional positional information, M-RoPE effectively models the positional information of multimodal inputs. 
This is achieved by deconstructing the original rotary embedding into
three orthogonal components: temporal (\( t \)), height (\( h \)), and width (\( w \)). 
For text, these components utilize identical position
IDs, making M-RoPE functionally equivalent to standard 1D-RoPE. 
For images, the temporal of each visual index is held constant, while height and width IDs are assigned distinctly based on the token’s position in the image. 
For videos, treated as frame sequences, the temporal ID increases with each frame, while the height and width IDs follow the same assignment pattern as for images.

Formally, given a query vector \( \bm{q}_i \in \mathbb{R}^{d_h} \) and key vector \( \bm{k}_i \in \mathbb{R}^{d_h} \), we partition them into  per-modality components:

\[
\bm{q}_i = \left[\bm{q}_i^{[t]};\, \bm{q}_i^{[h]} ;\, \bm{q}_i^{[w]} \right], \bm{k}_i = \left[\bm{k}_i^{[t]};\, \bm{k}_i^{[h]};\, \bm{k}_i^{[w]} \right]
\]

For each components, the embeddings are rotated separately by using its corresponding 2D rotary position encoding:
$\mathcal{G}_t\in[0, \tfrac{d_h}{8})$,
$\mathcal{G}_h\in[\tfrac{d_h}{8}, \tfrac{5d_h}{16})$,
$\mathcal{G}_w\in[\tfrac{5d_h}{16}, \tfrac{d_h}{2})$.
The query embeddings after applying M-RoPE are computed as follows:
\begin{align*}
\bm{q}_{i, rope}^{[t]} = \left[\bm{R}_{p_t}^{[2k, 2k+1]}(\theta_k)\bm{q}_i^{[2k, 2k+1]}\right]_{k\in\mathcal{K}_t},  \\
\bm{q}_{i, rope}^{[h]} = \left[\bm{R}_{p_h}^{[2k, 2k+1]}(\theta_k)\bm{q}_i^{[2k, 2k+1]}\right]_{k\in\mathcal{K}_h},  \\
\bm{q}_{i, rope}^{[w]} = \left[\bm{R}_{p_w}^{[2k, 2k+1]}(\theta_k)\bm{q}_i^{[2k, 2k+1]}\right]_{k\in\mathcal{K}_w}.  
\end{align*}
Thus, applying M-RoPE to both queries and keys becomes:
\begin{align*}
&\bm{q}_{i,\text{rope}} =
 \bigl[\bm{q}_{i,\text{rope}}^{[t]};\,
       \bm{q}_{i,\text{rope}}^{[h]};\,
       \bm{q}_{i,\text{rope}}^{[w]}\bigr]\in\mathbb{R}^{d_h},\\
&\bm{k}_{i,\text{rope}} =
 \bigl[\bm{k}_{i,\text{rope}}^{[t]};\,
       \bm{k}_{i,\text{rope}}^{[h]};\,
       \bm{k}_{i,\text{rope}}^{[w]}\bigr]\in\mathbb{R}^{d_h}.
\end{align*}
Let \( \bm{q}_i, \bm{k}_i \in \mathbb{R}^{d_h} \) be the query and key vectors at position \( i \). 
The corresponding relative matrix $\bm{A}$ is computed as:
\[
\bm{A}_{(t_i,h_i,w_i),(t_j,h_j,w_j)}=\bm{q}_{(t_i,h_i,w_i)}\bm{R}_{\Delta t,\Delta h,\Delta w}\bm{k}_{(t_j,h_j,w_j)}^\top,
\]
where $\Delta t=t_i-t_j$, $\Delta h=h_i-h_j$, and $\Delta w=w_i-w_j$.

\paragraph{Multimodal Adaptive Partial-RoPE Strategies}

Given $r$ retained rotational subspaces ($r=\frac{d_r}{2}\ll$ total subspaces $\frac{d_h}{2}$), the aim is to select which \( r \) subspaces preserve RoPE/M-RoPE encoding.

Recent research~\cite{DBLP:conf/acl/JiGWGSCQZG25} has systematically compared four heuristic partial RoPE methods: High-Frequency Preservation, Low-Frequency Preservation, Uniform Sampling, and Head-wise 2-norm contribution.
Experiments showed that the head‑wise 2‑norm contribution performs better.
Specifically, for each head \( h \), Head-wise 2-norm contribution computes the mean 2-norm score for each subspace in an LLM over long sequences. Then rank all subspaces by their 2-norm score and select the top-$r$:

\begin{equation}
    \mathcal{S}_{\text{2-norm}}\!=\!\underset{0\le k<\frac{d_h}{2}}{\text{top-}r} \left( \left\|\mathbf{q}_*^{[2k,2k+1]}\right\|\left\|\mathbf{k}_*^{[2k,2k+1]}\right\| \right).
\end{equation}






The above method uses heuristic calculations to remove unimportant subspaces and retain r important ones. It cannot effectively reflect the impact of removing specific dimensions on the original whole.
Moreover, to enable the migration of VLMs from MHA/GQA to MLA, 
We propose Contribution-Aware Multimodal Partial-RoPE, based on KL-divergence (MKL). A data-driven and training-free strategy that extends frequency-subspace selection to multimodal inputs.

For each layer $l$ and each attention head $h$ we compute the \emph{frequency-wise KL sensitivity}:
\begin{equation}
\mathcal{I}^{l,h,k}= \mathbb{E}_{\mathcal{D}}\Bigl[\,\mathrm{KL}\bigl(\mathbf{P}_{l,h}^{\mathrm{full}}\parallel\mathbf{P}_{l,h,k}^{\mathrm{masked}}\bigr)\Bigr],
\end{equation}
where $\mathbf{P}_{\mathrm{full}}^{l,h}$ denotes the attention distribution produced by the original full RoPE/M-RoPE model and $\mathbf{P}_{l,h,k}^{\mathrm{masked}}$ is obtained after zero-ablating the $k$-th subspace in the query and key projections of head $h$.  
A large $\mathcal{I}^{l,h,k}$ indicates that subspace $k$ is critical for positional understanding under the current multimodal inputs.
The ${d_h}/{2}$ subspaces are then ranked in descending order and the top-$r$ indices are retained:
\begin{equation}
\mathcal{S}_{\mathrm{MKL}}^{l,h}=\underset{0\le k<\frac{d_h}{2}}{\mathrm{top}\text{-}r}\ \mathcal{I}^{l,h,k}.
\end{equation}


Thus, our modality-adaptive strategy preserves rotation-critical subspaces. ~ \Cref{tab:compare_2norm} shows that our Multimodal Adaptive Partial-RoPE strategy (MKL) outperforms the strongest baseline ($\mathcal{S}_{\text{2-norm}}$).
We will analyze the effectiveness of the strategy in \Cref{sec:2norm_analysis}.
Based on our proposed method, non-selected subspaces (\( k \notin \mathcal{S} \)) become NoPE dimensions, enabling seamless integration with MLA's latent compression.

\begin{algorithm}[t]
\captionsetup{font=small}
\caption{Pseudocode of Modality-Decoupled SVD}
\small
\begin{algorithmic}[1]
\Statex \textbf{Input:} $W$ — Joint KV weight matrix 
\Statex \hspace{3em}$\mathbf X_{{visual}},\;\mathbf X_{{text}}$ — Visual / Text activations
\Statex \hspace{3em}$r_{{visual}},\;r_{{text}}$ — Target ranks
\Statex \textbf{Output:} $\bigl\{\mathbf W^{up}_m,\mathbf W^{down}_m\bigr\}_{m\in\{{visual},{text}\}}$
\Procedure{MD\_SVD}{$W,\mathbf X_{{visual}},\mathbf X_{{text}},r_{{visual}},r_{{text}}$}
    \For{$m\in\{{Visual},{Text}\}$}                            \Comment{process each modality}
        \State $\mathbf S_m \gets \mathbf X_m\,\mathbf X_m^{\top}$                 \label{ln:cov}
        \State $[\mathbf U_s,\boldsymbol\Sigma_s,\_] \gets \operatorname{SVD}(\mathbf S_m)$
        \State $\mathbf D \gets W\,\mathbf U_s\,\boldsymbol\Sigma_s^{1/2}$         \label{ln:mid}
        \State $[\mathbf U_d,\boldsymbol\Sigma_d,\mathbf V_d] \gets \operatorname{SVD}(\mathbf D)$
        \State Keep first $r_m$ components of $(\mathbf U_d,\boldsymbol\Sigma_d,\mathbf V_d)$
        \State $\mathbf W^{up}_m \gets \mathbf U_d\,\boldsymbol\Sigma_d^{1/2}$ \label{ln:up}
        \State $\mathbf W^{down}_m \gets
               \boldsymbol\Sigma_d^{1/2}\mathbf V_d\,
               \boldsymbol\Sigma_s^{-1/2}\,\mathbf U_s^{-1}$           \label{ln:down}
    \EndFor
    \State \Return $\bigl\{\mathbf W^{up}_m,\mathbf W^{down}_m\bigr\}_{m}$ \label{ln:return}
\EndProcedure
\end{algorithmic}
\label{algo:md-svd}
\end{algorithm}

\subsection{Modality-Decoupled SVD (MD-SVD)}
\label{sec:md-svd}

After transforming the VLMs' Vanilla RoPE/M-RoPE to modality-adaptive partial RoPE, we get the first component $\bm{k}_{rope}$. The next step is to construct $\bm{c}^{(m)}_{kv}\;\in\;\mathbb{R}^{d_{kv}}$, a low-rank joint embedding of the modality-specific \(\bm{k}^{(m)}_{{nope}}\) and \(\bm{v}^{(m)}\).







\paragraph{Unimodal SVD Baselines}
Studies on SVD-driven LLM compression can be grouped into two paradigms.
The first paradigm operates directly on the models' weights~\cite{DBLP:conf/iclr/HsuHCLSJ22} reduces truncation loss by estimating weight importance and preserving more important weights. ~\cite{DBLP:conf/acl/JiGWGSCQZG25} jointly optimizes the latent space for both keys and values, which validates that joint factorization better preserves pre-trained knowledge.
The second paradigm~\citep{DBLP:conf/iclr/01200W025,DBLP:conf/naacl/WangAWSZ25} augments SVD with activation-aware transformations to lower the truncation loss $L$ in the form of Frobenius norm as follows during LLM compression:

\begin{equation}
    \mathcal{L}^2 = ||\bm W \bm X-\bm W' \bm X||_F^2
\end{equation}

Given a single-modality activation matrix $\mathbf X$, we first construct the covariance
$\bm S=\bm X\bm X^{\top}$, second, we performs a second 
SVD on $\bm D=
\bm W\bm U_s\bm\Sigma_s^{1/2}$.  
~\cite{DBLP:conf/naacl/WangAWSZ25} proves the compressed weight
\(
  \bm W'=\bm U_d\,
             \operatorname{Trunc.}(\bm\Sigma_d)\,
             \bm V_d\,
             \bm\Sigma_s^{-1/2}\bm U_s^{-1}
\)
achieves the theoretical minimum truncation loss.


\paragraph{Modality-Decoupled SVD}
\paragraph{Motivation}
Visual and text activations exhibit distinct scales,
estimating a single covariance on
$\bm{S}_{visual}$ or $\bm{S}_{text}$ causes the dominant modality to distort the singular value distribution, reducing the quality of other modality after SVD.
Our aim is to keep the \emph{shared} KV weight $W$ intact while
deriving two modality-decoupled low-rank projections that jointly
minimise truncation loss. The pseudocode of our proposed MD-SVD is provided in~\Cref{algo:md-svd}.

\begin{theorem} \label{throrem:md_svd}
For a multimodal token sequence, we denote the joint activation matrix as $\bm{X}_{joint}\in\mathbb{R}^{d\times (n_v+n_t)}$ which can be partitioned into two modality-specific components:
$$\bm{X}_{joint}=[\bm{X}_{visual};\bm{X}_{text}]$$
where $\bm{X}_{visual}\in\mathbb{R}^{d\times n_v}$ and $\bm{X}_{text}\in\mathbb{R}^{d\times n_t}$ denote the activations corresponding to visual and text tokens, respectively.

Then the minimum loss of joint-modals is larger than or equal to the sum of the minimum losses of split-modals:
\begin{equation}
    \min\mathcal{L}^{2}_{joint}\ge \min\mathcal{L}^2_{visual} + \min\mathcal{L}^2_{text}
\end{equation}

\end{theorem}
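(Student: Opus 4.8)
The plan is to reduce the statement to two elementary observations: an \emph{exact} additive split of the joint reconstruction loss over the column blocks of $\bm X_{joint}$, and the superadditivity of the minimum of a sum. Fix any admissible low-rank reconstruction $\bm W'$ (i.e.\ $\bm W'=\bm W^{up}\bm W^{down}$ with the target inner rank, equivalently $\mathrm{rank}(\bm W')$ bounded by that rank) and write $\mathcal{L}^2_{m}(\bm W')=\|\bm W\bm X_{m}-\bm W'\bm X_{m}\|_F^2$ for each modality $m$. Because $\bm X_{joint}=[\bm X_{visual};\bm X_{text}]$ is a horizontal concatenation of column blocks, the residual $(\bm W-\bm W')\bm X_{joint}$ is the concatenation of $(\bm W-\bm W')\bm X_{visual}$ and $(\bm W-\bm W')\bm X_{text}$, and the squared Frobenius norm of a block matrix equals the sum of the squared norms of its blocks. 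Hence, for \emph{every} admissible $\bm W'$,
\[
\mathcal{L}^2_{joint}(\bm W')=\|(\bm W-\bm W')\bm X_{visual}\|_F^2+\|(\bm W-\bm W')\bm X_{text}\|_F^2=\mathcal{L}^2_{visual}(\bm W')+\mathcal{L}^2_{text}(\bm W').
\]

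Next I would take the infimum over the common class of admissible reconstructions. Minimizing the left-hand side commits to a single $\bm W'$ that must serve both residual terms at once, which is feasible but generally suboptimal for each term separately, so
\[
\min_{\bm W'}\bigl(\mathcal{L}^2_{visual}(\bm W')+\mathcal{L}^2_{text}(\bm W')\bigr)\ \ge\ \min_{\bm W'}\mathcal{L}^2_{visual}(\bm W')+\min_{\bm W''}\mathcal{L}^2_{text}(\bm W'').
\]
Finally I would invoke the activation-aware SVD optimality result quoted above from SVD-LLM~V2: for any single activation set $\bm X$, the closed-form construction $\bm W'=\bm U_d\,\operatorname{Trunc.}(\bm\Sigma_d)\,\bm V_d\,\bm\Sigma_s^{-1/2}\bm U_s^{-1}$ attains the true minimum of $\|\bm W\bm X-\bm W'\bm X\|_F^2$ at the given rank. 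Applying this with $\bm X=\bm X_{visual}$ and $\bm X=\bm X_{text}$ identifies the two right-hand minima with the losses produced by the visual and textual branches of \Cref{algo:md-svd}, i.e.\ with $\min\mathcal{L}^2_{visual}$ and $\min\mathcal{L}^2_{text}$; applying it with $\bm X=\bm X_{joint}$ identifies the left-hand minimum with $\min\mathcal{L}^2_{joint}$. Chaining the three displays gives the claim.

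I expect the main obstacle to be bookkeeping about what "minimum" means, not any hard estimate. First, all three minimizations must range over the \emph{same} admissible rank budget: if the joint reconstruction were granted a larger budget (say $r_{visual}+r_{text}$) it can strictly outperform the split sum — a tiny diagonal example with $\bm W=\bm I$ already witnesses this — so the inequality is genuinely a statement at a shared rank and this must be stated carefully. Second, $\mathcal{L}^2_{joint},\mathcal{L}^2_{visual},\mathcal{L}^2_{text}$ in the theorem denote \emph{optimal} truncation losses rather than the losses of arbitrary factorizations; this is exactly where the SVD-LLM~V2 optimality theorem is needed, since the additive split alone only relates the losses of one particular pair of reconstructions. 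It is also worth recording when equality holds: precisely when some single rank-constrained reconstruction is simultaneously optimal for both modalities. Since $\bm X_{visual}\bm X_{visual}^{\top}$ and $\bm X_{text}\bm X_{text}^{\top}$ have different scales and dominant eigenvectors, no common truncation is optimal for both, so the gap is strict in practice — which is the quantitative justification for decoupling the SVD.
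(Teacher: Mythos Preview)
Your proposal is correct and follows essentially the same route as the paper: an additive split of the squared Frobenius norm over the column blocks of $\bm X_{joint}$, followed by the elementary inequality $\min_{\bm W'}(f+g)\ge\min f+\min g$, with SVD-LLM~V2 invoked to identify each minimum. Your treatment is in fact more careful than the paper's---in particular, your remark that all three minimizations must range over the \emph{same} rank budget is a genuine caveat the paper leaves implicit, and your equality characterization is a useful addition.
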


\begin{proof}
First, Given $\bm X_{joint}$ and weight matrix $\bm W$, inspired by ~\cite{DBLP:conf/naacl/WangAWSZ25}, we can obtain an optimal matrix $\bm W'$ that minimizes the joint loss $\mathcal{L}_{joint}^2:$

\begin{equation}
    \min\mathcal{L}^{2}_{joint}=|| \bm W \bm X_{joint}-\bm W' \bm X_{joint}||_F^2
\end{equation}

Second, by decomposing $X_{joint}$ into its constituent modalities, we obtain the following:

\begin{equation}
    \begin{aligned}
        \min\mathcal{L}^{2}_{joint}
        &=
        \left\lVert
        \bm W[\bm X_{visual};\bm X_{text}]
        \right. \\
        &\quad
        \left.
        -\,\bm W^{\prime}[\bm X_{visual};\bm X_{text}]
        \right\rVert_F^2 \\
        &=
        \left\lVert\bm W\bm X_{visual}-\bm W^{\prime}\bm X_{visual}\right\rVert_F^2 \\
        &\quad+
        \left\lVert\bm W\bm X_{text}-\bm W^{\prime}\bm X_{text}\right\rVert_F^2 \\
        &\ge \min\mathcal{L}^2_{visual} + \min\mathcal{L}^2_{text}.
    \end{aligned}
\end{equation}

This inequality holds because joint optimization imposes shared weights between different modalities, which limits the upper bound of model optimization. In contrast, our proposed Modality-Decoupled SVD (MD-SVD) optimizations allow for separate weights, enabling each branch to independently minimize its truncation loss. We provide a quantitative analysis in~\Cref{sec:svd_proof} to further validate this theoretical insight.

\end{proof}

\begin{table*}[th]
    \centering
    \small
    \setlength{\tabcolsep}{1.8mm} 
    \begin{tabular}{l@{}lccr@{\hspace{2pt}}lcccccccc}
      \toprule
      \multicolumn{2}{l}{\textbf{Model}} & \textbf{Tokens} & \textbf{KV Mem.} & \multicolumn{2}{c}{\textbf{Avg.}} & \textbf{AI2D} & \textbf{GQA} & \textbf{POPE} & \textbf{SEED$^{\text{I}}$} & \textbf{RWQ} & \textbf{MMB$^{\text{EN}}$} & \textbf{Chart} & \textbf{Doc} \\
      \midrule
      \rowcolor{gray!10} \multicolumn{2}{l}{${\text{LLaVA-1.5}_{7B}}$} & {} &   
      & \multicolumn{2}{l}{63.89} & 53.63 & 61.27 & 84.03 & 65.28 & 56.21 & 62.89 & - & - \\
      \textit{- MHA}& ~$d_{kv}\!=\!256$ &  &   
      & 64.13 & & 54.92 & 61.97 & 83.96 & 64.80 & 56.34 & 62.80 & - & - \\
      \multirow{3}{*}{\textit{- MHA2MLA}}   & ~$d_{kv}\!=\!64$ & {0.5B} & -62.50\%  
      & 63.84 & \textsubscript{-0.29} & 54.05 & 62.32 & 82.74 & 65.17 & 54.51 & 64.26 & - & - \\
      & ~$d_{kv}\!=\!32$ & {(0.025\%)} & -75.00\%  
      & 63.58 & \textsubscript{-0.55} & 52.49 & 62.06 & 83.18 & 65.04 & 54.90 & 63.83 & - & - \\
      & ~$d_{kv}\!=\!16$ &  & -81.30\% 
      & 62.27 & \textsubscript{-1.86} & 48.41 & 62.08 & 83.61 & 65.85 & 51.37 & 62.29 & - & - \\
      
    \midrule
    
    \rowcolor{gray!10} \multicolumn{2}{l}{${\text{LLaVA-NeXT}_{8B}}$} & {} &   
      & \multicolumn{2}{l}{70.72} & 70.56 & 65.13 & 87.18 & 72.46 & 58.69 & 71.82 & 68.44 & 71.47 \\
      \textit{- GQA}& ~$d_{kv}\!=\!256$ &  &   
      & 70.78 & & 70.76 & 65.30 & 87.22 & 71.95 & 59.22 & 71.74 & 69.04 & 70.99 \\
      \multirow{3}{*}{\textit{- GQA2MLA}}   & ~$d_{kv}\!=\!128$ & {1.8B} & -84.38\%  
      & 70.23 & \textsubscript{-0.55} & 69.53 & 65.25 & 85.96 & 71.90 & 59.08 & 72.08 & 68.68 & 69.34 \\
      & ~$d_{kv}\!=\!64$ & {(0.012\%)} & -90.63\% 
      & 68.75 & \textsubscript{-2.03} & 68.20 & 64.45 & 86.32 & 71.57 & 58.56 & 69.50 & 65.56 & 65.83 \\
      & ~$d_{kv}\!=\!32$ & {} & -93.75\%  
      & 66.72 & \textsubscript{-4.06} & 65.90 & 63.98 & 86.56 & 71.15 & 56.86 & 63.32 & 63.60 & 62.41 \\
      
    \midrule
    
    \rowcolor{gray!10} \multicolumn{2}{l}{${\text{Qwen2.5-VL}_{7B}}$} & {} &   
      & \multicolumn{2}{l}{79.47} & 82.58 & 60.42 & 86.22 & 77.34 & 68.37 & 82.82 & 83.24 & 94.78 \\
      \textit{- GQA}& ~$d_{kv}\!=\!256$ &  &   
      & 80.75 & & 83.35 & 63.26 & 87.50 & 76.83 & 69.67 & 84.28 & 86.88 & 94.29 \\
      \multirow{3}{*}{\textit{- GQA2MLA}}   & ~$d_{kv}\!=\!128$ & {0.5B} & -91.07\%  
      & 80.63 & \textsubscript{-0.12} & 82.71 & 63.26 & 87.54 & 76.83 & 69.41 & 84.02 & 86.80 & 94.48 \\
      & ~$d_{kv}\!=\!64$ & {(0.002\%)} & -94.64\%  
      & 79.47 & \textsubscript{-1.28} & 80.63 & 63.08 & 87.85 & 76.51 & 68.76 & 81.35 & 84.76 & 92.81 \\
      & ~$d_{kv}\!=\!32$ & {} & -96.43\%  
      & 77.02 & \textsubscript{-3.73} & 78.11 & 62.87 & 87.42 & 74.75 & 66.54 & 77.15 & 82.16 & 87.17 \\

      \bottomrule
    \end{tabular}
    \caption{Performance of three VLMs with different architectures (e.g., MHA2MLA, GQA2MLA) and $d_{kv}$. The eight benchmarks include AI2D (\citeyear{DBLP:conf/eccv/KembhaviSKSHF16}), GQA (\citeyear{DBLP:conf/cvpr/HudsonM19}), POPE (\citeyear{DBLP:conf/emnlp/LiDZWZW23}), SEED-Bench (SEED,~\citeyear{li2023seedbenchbenchmarkingmultimodalllms}), RealWorldQA (RWQ,~\citeyear{DBLP:conf/iclr/0004ZTFZWLWW00025}), MMBench (MMB,~\citeyear{DBLP:conf/eccv/LiuDZLZZYWHLCL24}), ChartQA (\citeyear{DBLP:conf/acl/MasryLTJH22}), DocVQA (\citeyear{DBLP:conf/wacv/MathewKJ21}).} 
\label{tab:main_result}
\end{table*}

\section{Experiment}
\label{sec:exper}

\paragraph{Setups} 

Details of the models, datasets, parameter-efficient and data-efficient strategies, evaluation, and hyperparameter setups are placed in Appendix A.

Based on the above settings, 
We conduct systematic experiments of MHA2MLA and GQA2MLA under various KV dimensions, our experiments address three critical questions:
\begin{enumerate}[leftmargin=*,itemsep=0pt, topsep=0pt, parsep=0pt]
    \item Can MHA2MLA-VLM maintain multimodal accuracy when training and inference are limited to a tight computation or data budget?
    \item What are the characteristics of SVD in multimodal scenarios using MHA2MLA-VLM?
    \item Comparison between MHA2MLA-VLM and cache pruning, and can it be combined with cache compression?
\end{enumerate}

\subsection{Main Results}
As shown in~\Cref{tab:main_result}, compared to the original VLMs after fine-tuning ($d_{kv}=256$), as $d_{kv}$ decreases, our method significantly reduces the KV cache memory with only slight performance drop. 
For example, for Qwen2.5-VL, our GQA2MLA method still achieves an overall performance of 79.47 even after reducing the KV memory by 94.64\%, which is comparable to the original model or the fine-tuned baseline model.

Second, our architecture migration method demonstrates both parameter and data efficiency. 
Compared to existing models training from scratch that rely on trillions of training tokens~\cite{bai2025qwen25vltechnicalreport}, our approach enables the migration from MHA to MLA architectures using only within 1.8B tokens. 
Furthermore, within our two-stage training framework, we fine-tune only \textbf{\(\sim\!\bm {10}\ \bm {\%} \)} of the VLM's parameters while still achieving competitive performance.
These results highlight the effectiveness of our method in enabling architectural adaptation with limited data or GPU resources, making it highly applicable to resource-constrained scenarios.

In addition, Figure~\ref{fig:dkv_loss} shows the parameter and data efficient fine-tuning losses of MHA2MLA and GQA2MLA in different compression ratios. 
Even with small data (0.002\%) and only $\sim 10\%$ parameter updates, training converges quickly. 
Greater compression widens the loss gap compared to the uncompressed baseline. When $d_{kv}$ is large (64 or 128), with KV cache compression of 84.38\% (LLaVA-NeXT) and 62.50\% (LLaVA-1.5), it achieves loss comparable to fine-tuning the original model.
Note that the changing trends of these loss curves are the same, indicating that our architecture transfer preserves the VLMs' internal knowledge to a large extent.

Overall, our MHA2MLA-VLM method generalizes across both MHA and GQA architectures, enabling parameter-efficient and data-efficient training for diverse VLMs.

\begin{figure}[t]
    \centering
\includegraphics[width=1.0\linewidth]{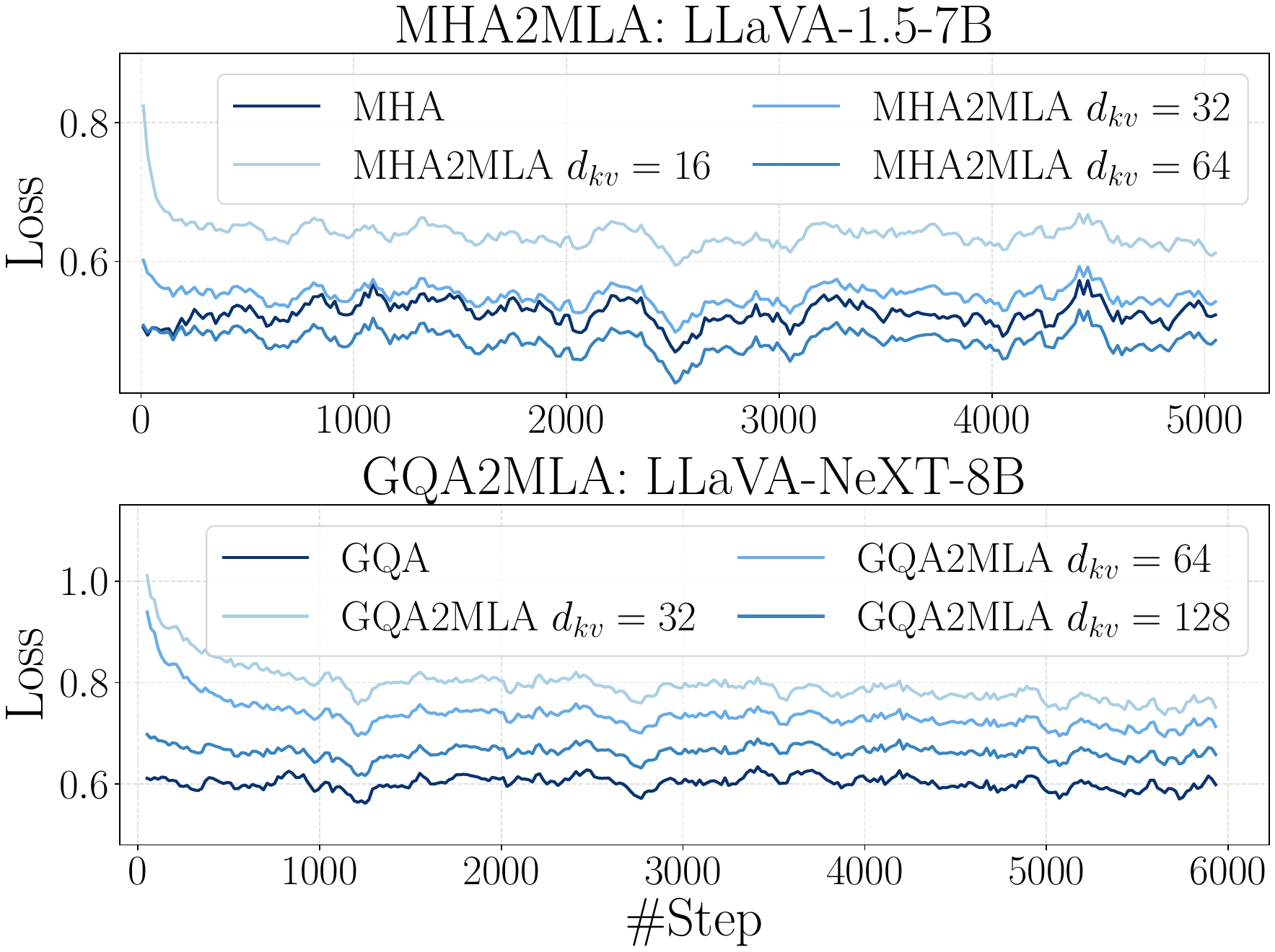}
    \caption{Training loss curves of \textbf{MHA2MLA} (LLaVA-1.5) and \textbf{GQA2MLA} (LLaVA-NeXT) with different $d_{kv}$ settings.}
    \label{fig:dkv_loss}
\end{figure}

\begin{table}[t]
\centering
\begin{tabular}{c@{\hskip 5pt}@{\hskip 5pt}p{1.5cm}@{\hskip 5pt}c@{\hskip 5pt}c}
  \toprule
  {\textbf{Model}} & \textbf{Type} & \textbf{KV Mem.} & \textbf{Avg.} \\
  \midrule
 \rowcolor{gray!10} \multicolumn{4}{c}{\textit{\textbf{Origin}}} \\

  $\text{LLaVA-NeXT}$ &
  BF16 & 100.0\% & 70.78 \\

 \rowcolor{gray!10} \multicolumn{4}{c}{\textit{\textbf{Cache Pruning}}} \\
  \arrayrulecolor{gray!20}
  \hline
  \multirow{6}{*}{$\text{LLaVA-NeXT}$}
  & H\textsubscript{2}O & \multirow{2}{*}{-81.25\%} & 61.43 \\
  & TOVA & & 54.53 \\
  & H\textsubscript{2}O & \multirow{2}{*}{-75.00\%} & 62.34 \\
  & TOVA & & 57.24 \\
  & H\textsubscript{2}O & \multirow{2}{*}{-62.50\%} & 63.38 \\
  & TOVA & & 60.48 \\
  
\rowcolor{gray!10} \multicolumn{4}{c}{\textit{\textbf{Cache Quantization}}} \\
\multirow{4}{*}{$\text{LLaVA-NeXT}$}
  & Int4$_{\text{Quanto}}$ & \multirow{2}{*}{-75.00\%} 
  & 70.53 \\
  & Int4$_{\text{HQQ}}$ & & 70.62 \\
  & Int2$_{\text{Quanto}}$ & \multirow{2}{*}{-87.50\%} 
  & 67.21 \\
  & Int2$_{\text{HQQ}}$ & & 60.71 \\

  \arrayrulecolor{black}
  \midrule
  \multirow{3}{*}{~$d_{kv}\!=\!128$} &
  BF16 & -37.50\% & 70.23 \\
  \arrayrulecolor{gray!20}
  \hline
  & Int4$_{\text{Quanto}}$ & \multirow{2}{*}{-84.38\%}  & \bf 70.22 \\
  & Int4$_{\text{HQQ}}$ & & \bf 70.21 \\

  \arrayrulecolor{black}
  \midrule
  \multirow{3}{*}{~$d_{kv}\!=\!64$} &
  BF16 & -62.50\% & \bf 68.75 \\
  \arrayrulecolor{gray!20}
  \hline
  & Int4$_{\text{Quanto}}$ & \multirow{2}{*}{-90.63\%}  & \bf 68.66 \\
  & Int4$_{\text{HQQ}}$ & & \bf 68.64 \\
  
  \arrayrulecolor{gray!20}
  \arrayrulecolor{black}
  \midrule
  \multirow{3}{*}{~$d_{kv}\!=\!32$} &  
  BF16 & -75.00\% & \bf 66.72 \\
  \arrayrulecolor{gray!20}
  \hline
  & Int4$_{\text{Quanto}}$ & \multirow{2}{*}{-93.75\%}  & \bf 66.71 \\
  & Int4$_{\text{HQQ}}$ &  & \bf 66.72 \\
  
  \arrayrulecolor{black}
    \bottomrule
\end{tabular}
\caption{Comparison of GQA2MLA with other cache compression strategies, including Cache Pruning and Cache Quantization for LLaVA-NeXT. Bolded scores indicate that GQA2MLA outperforms Cache Pruning or Int2 quantization methods at the same or higher compression levels.}
\label{tab:compare_cache}
\end{table}

\begin{table}[htb]
\centering
\small
\begin{tabular}{lccr@{\hspace{2pt}}c}
  \toprule
  \textbf{Model} & \textbf{$d_{kv}$} & \textbf{KV Mem.} & \multicolumn{2}{l}{\textbf{Avg}} \\
  \midrule
  \multirow{2}{*}{$\text{LLaVA-NeXT}_\text{GQA2MLA}$} & 32 & 93.75\% & 67.56 & \\
  & 64 & 90.63\% & 69.70 & \\
\midrule
\multirow{2}{*}{\textit{w/o Modality Decoupled}} & 32 & 93.75\% & 67.14 & \\
& 64 & 90.63\% & 69.25 & \\
\midrule
\multirow{2}{*}{\textit{w/o MD-SVD Init}}& 32 & 93.75\% & 68.21 & \\
& 64 & 90.63\% & 68.47 & \\
\midrule
\multirow{2}{*}{\textit{w/o Two Stage}} & 32 & 93.75\% & 67.03 & \\
& 64 & 90.63\% & 68.82 & \\
  \bottomrule
\end{tabular}
\caption{Ablation study on the core designs of GQA2MLA on LLaVA-NeXT, including Two Stage training, Modality Decoupled, and MD-SVD Initialization.}
\label{tab:ablation_study}
\end{table}

\subsection{MHA2MLA, Cache Pruning and Compression}
To demonstrate the efficiency of MHA2MLA-VLM in compressing KV cache, we compare it with representative pruning and quantization methods, as summarized in Table \ref{tab:compare_cache}.

First, at comparable compression ratios, MHA2MLA achieves significantly better performance than the cache pruning methods. For instance, at a 62.50\% reduction in KV cache, the average score reaches 68.75, while cache pruning methods such as H$_2$O and TOVA yield notably lower scores (e.g., 63.38 and 60.48, respectively). Similar trends can be observed under 75.00\% and 81.25\% compression ratios. 

Second, MHA2MLA can be effectively combined with cache quantization methods to achieve further compression without performance degradation. 
For all levels of $d_{kv}$ compression in GQA2MLA, applying Int4$_{\text{Quanto}}$ and Int4$_{\text{HQQ}}$, even at compression levels of 84.38\% ($d_{kv}$=128 with 70.22) and 90.63\% ($d_{kv}$=64 with 68.66), the performance is still significantly better than the Int2$_{\text{HQQ}}$ of 67.21 based on cache quantization at 87.5\%.
These results demonstrate that MHA2MLA and quantization techniques are highly compatible and can be effectively combined to achieve compounded compression benefits with minimal loss in performance.

These results validate that MHA2MLA achieves better performance compared to existing cache compression strategies, and can integrate seamlessly with cache quantization methods to achieve higher cache reduction.

\subsection{Ablation Study}

\begin{figure}[t]
    \centering
    \includegraphics[width=1.0\linewidth]{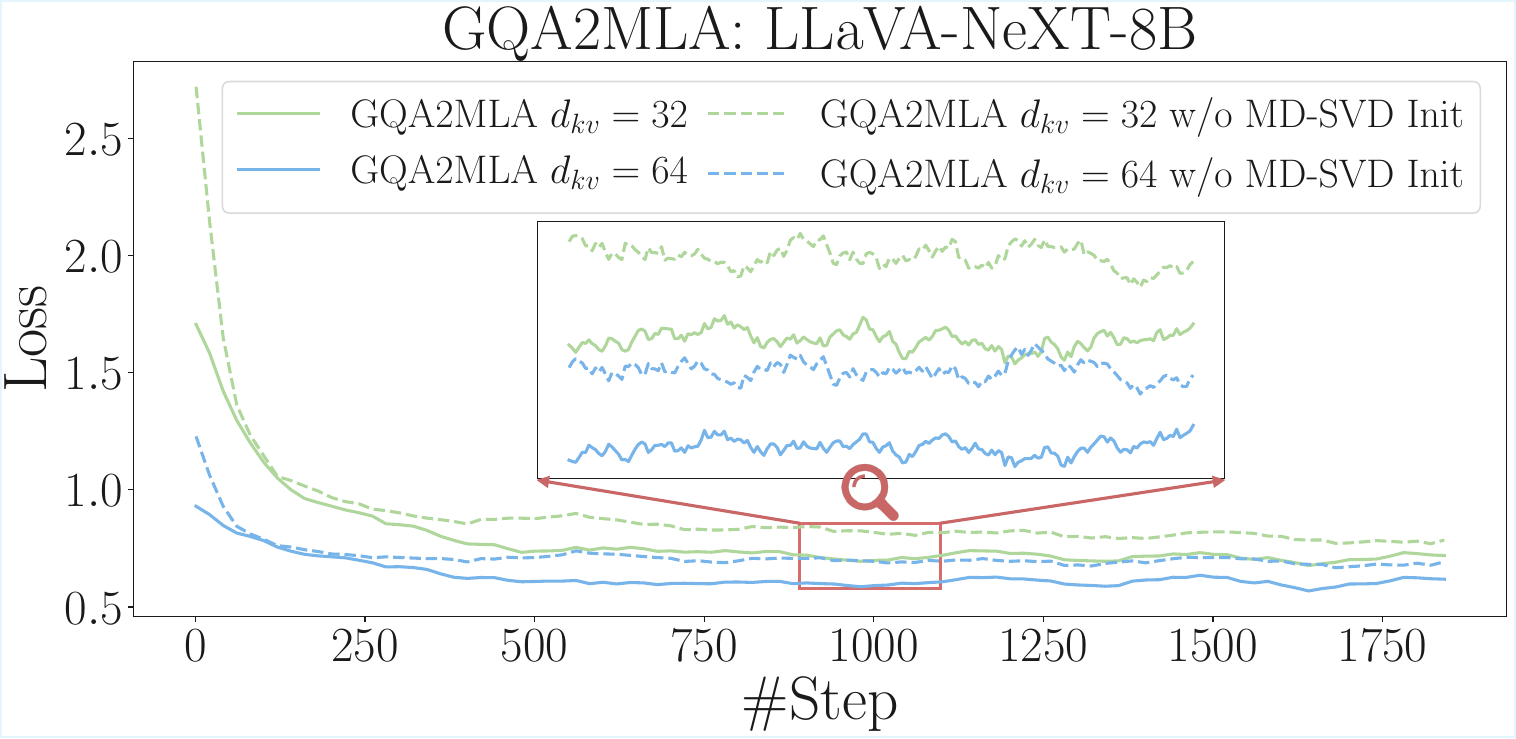}
    \caption{Training loss of GQA2MLA on LLaVA-NeXT w and w/o MD-SVD initialization under different $d_{kv}$.}
    \label{fig:woinit_loss}
\end{figure}

\paragraph{Effect of Modality-Decoupled SVD}

To better understand the contribution of our proposed MD-SVD, we disentangle its two core designs (e.g., modality decoupled and SVD-based initialization) and evaluate them separately. As depicted in Table~\ref{tab:ablation_study}, modality decoupled leads to consistent improvements across all $d_{kv}$ (+0.42 at $d_{kv}=32$, +0.45 at $d_{kv}=64$). While MD-SVD Init model exhibits slightly lower performance at $d_{kv}=\text{32}$, it achieves a substantial improvement at $d_{kv}=64$ (+1.23). Thus, SD-SVD Init performs better overall.

Moreover, Figure~\ref{fig:woinit_loss} demonstrates that under various KV cache compression conditions, our proposed MD-SVD initialization not only reduces the initial training loss, but also leads to better convergence throughout training. 
Overall, these results demonstrate that SVD Init enhances both optimization efficiency and model robustness across a range of low-ranks.

\paragraph{Effect of Two Stage Training} 
Table~\ref{tab:ablation_study} indicates that two-stage efficient parameter fine-tuning outperforms single-stage tuning at various cache compression rates, demonstrating the effectiveness of two-stage training.

\begin{table}[htb]
\centering
\small
\begin{tabular}{ccc}
  \toprule
  \textbf{Model} & \textbf{Partial-RoPE Strategy} & {\textbf{Avg.}} \\
  \midrule
  {$\text{LLaVA-NeXT}$} & $\mathcal{S}_{\text{2-norm}}$ & 70.31 \\
  \midrule
{$\text{LLaVA-NeXT}$} & $\mathcal{S}_{\text{MKL}}$ & \textbf{70.54} \\
  \bottomrule
\end{tabular}
\caption{Comparison of average performance under different Partial-RoPE strategies for LLaVA-NeXT.}
\label{tab:compare_2norm}
\end{table}
\paragraph{Effect of Partial-RoPE strategies}

Table~\ref{tab:compare_2norm} presents a comparison of LLaVA-NeXT performance under two different Partial-RoPE strategies. We observe that $\mathcal{S}_{\text{MKL}}$ achieves better performance than $\mathcal{S}_{\text{2-norm}}$. To better understand this behavior, we conduct a detailed analysis in~\Cref{sec:2norm_analysis}.

\section{Analysis}
\subsection{Empirical Validation for MD-SVD}

\label{sec:svd_proof}
Theorem~\ref{throrem:md_svd} proves that the loss from our proposed Modality-Decoupled SVD strategy is always smaller than the loss from jointly optimizing over visual and text (e.g. $\min\mathcal{L}^{2}_{joint}\ge \min\mathcal{L}^{2}_{visual}+\min\mathcal{L}^{2}_{text}$). To empirically validate this theoretical insight, we compute the ratio between split and joint losses, defined as ${\mathcal{L}^{2}_{split}}/{\mathcal{L}^{2}_{joint}}$ to and report this across all layers for different VLM architectures, as shown in Figure~\ref{fig:svdllm_loss}.

First, across all models, the ratio curves (three solid lines) consistently fall below the joint baseline (red dotted line) throughout all layers. This observation indicates that processing visual and text modalities decoupled leads to a strictly lower loss compared to joint processing, irrespective of architecture, which empirically validates Theorem 1.

Second, the relative advantage of the decoupled strategy becomes increasingly significant as the layer increases. It indicates that deeper layers are more sensitive to cross-modal interference and benefit more from modality-decoupled optimization. For example, in Qwen2.5-VL, the ratio decreases progressively and reaches a maximum relative loss reduction of 35.85\% at the final layer, highlighting the decoupled approach's compounding benefit in deep architectures.

\begin{figure}[htb]
    \centering
    \includegraphics[width=1.0\linewidth]{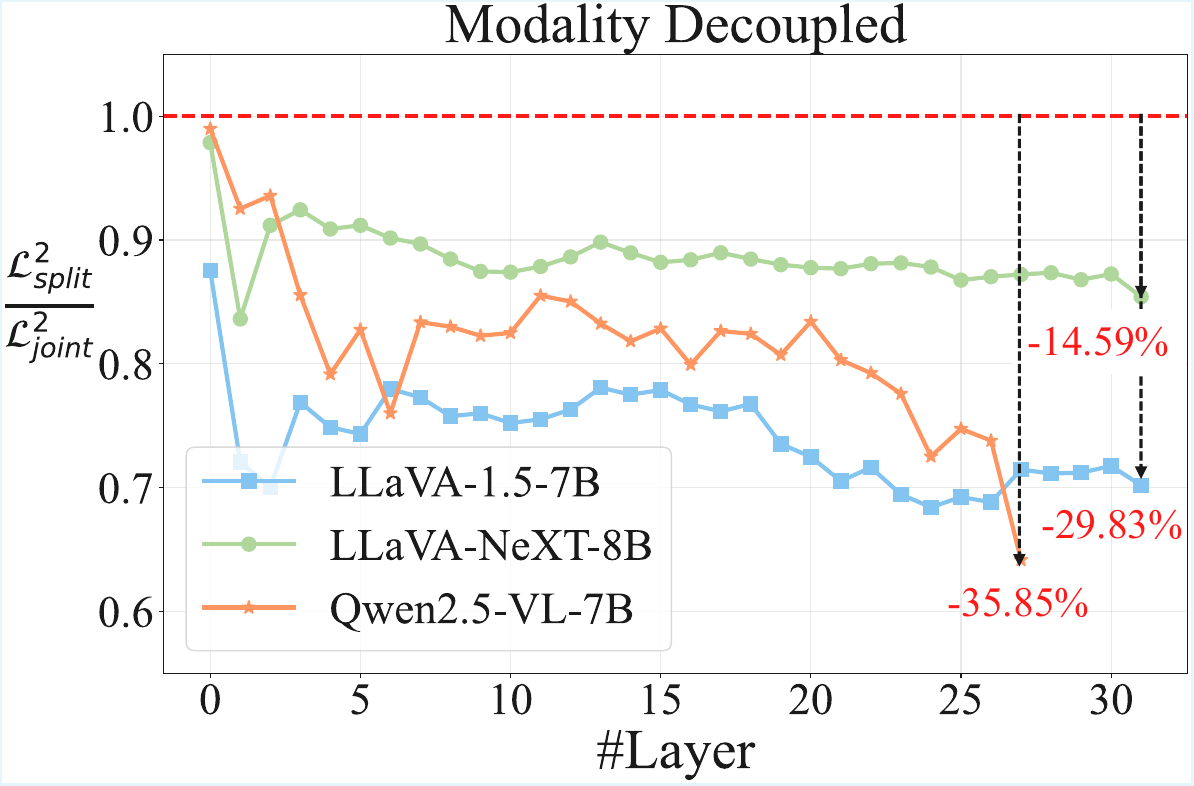}
    \caption{Quantitative analysis of MD-SVD via layer-wise loss ratio. Modality decoupled shows consistent improvements over joint optimization across all models.}
    \label{fig:svdllm_loss}
\end{figure}

\subsection{Comparison of  $\mathcal{S}_{\text{2-norm}}$ and $\mathcal{S}_{\text{MKL}}$}
\label{sec:2norm_analysis}

~\Cref{fig:rank_version} analyze the frequency profiles of the selected key dimensions under two strategies to understand their frequency preferences. $\mathcal{S}_{\text{MKL}}$ consistently selects dimensions that concentrate more heavily in the high-frequency range, as indicated by medians and tighter interquartile ranges.
In contrast, $\mathcal{S}_{\text{2-norm}}$ exhibits a broader spread and a strong tendency toward low-frequency selection, which are always noisy and less informative. 
$S_{\text{MKL}}$ explicitly evaluates the impact of each frequency component on the final attention distribution. Frequencies that induce larger changes are considered more informative and thus preferentially selected, allowing the model to retain discriminative high-frequency dimensions and improving the effectiveness of key-value compression.

Empirical results show that this sensitivity-based criterion  consistently selects more important key dimensions, and leads to lower attention loss and better downstream performance, especially in multimodal settings where fine-grained alignment is crucial.

\section{Related Work}
\label{sec:related-work}

\paragraph{Vision-Language Models} Represented by GPT4
~\cite{openai2024gpt4technicalreport}, VLMs have shown their strong strength and are increasingly becoming one of the mainstream research directions. 
They combine visual and language models to achieve cross-modal understanding and reasoning capabilities. 
Pioneering models such as 
LLaVA~\cite{DBLP:conf/cvpr/LiuLLL24} uses a simple projection layer to promote image-text alignment and uses a two-stage training method to improve model capabilities. 
Furthermore, MouSi~\cite{fan2024mousipolyvisualexpertvisionlanguagemodels} and Cambrian-1~\cite{DBLP:conf/nips/TongBWWIAYYMWPF24} leverage the unique attributes of diverse visual encoders and unify their strengths to enrich the multimodal understanding of VLMs.
Recently, the InternLM-XComposer~\cite{zhang2023internlmxcomposervisionlanguagelargemodel,dong2024internlmxcomposer2masteringfreeformtextimage,zhang2024internlmxcomposer25versatilelargevision} and InternVL~\cite{chen2024internvlscalingvisionfoundation,DBLP:journals/chinaf/ChenWTYGCTHLMMWDYGHSJXW24,zhu2025internvl3exploringadvancedtraining} family of models have shown leading performance. 

Moreover, videos or multiple images require more tokens for visual signals. For example, VideoPoet~\cite{DBLP:conf/icml/KondratyukYGLHS24} and VideoLLaVA~\cite{DBLP:conf/emnlp/LinYZCNJ024} encode each frame with thousands of tokens, quickly saturating computation budgets. This token explosion makes attention the primary bottleneck and calls for stronger sparsification strategies to unlock the next level of VLM performance. 

\paragraph{Efficient Architectures}  

KV cache consumption is a significant challenge for LLMs and VLMs, especially when dealing with long contexts. 
The primary methods for KV-cache compression can be broadly categorized into three approaches:

Model-level optimization like MQA~\cite{shazeer2019fasttransformerdecodingwritehead} and GQA~\cite{DBLP:conf/emnlp/AinslieLJZLS23} use intra-layer grouping, sharing key-value pairs across heads to save memory. MLA~\cite{deepseekai2024deepseekv2strongeconomicalefficient} further compresses KV caches via a low-rank joint representation across heads and layers, achieving substantial savings and demonstrating effectiveness in large-scale deployments.

Cache quantization techniques, such as KVQuant~\cite{DBLP:conf/nips/HooperKMMSKG24}, MassiveActivation~\cite{sun2024massiveactivationslargelanguage}, and HQQ~\cite{badri2023hqq}, compress KV cache into low-bit formats (e.g., 4-bit or 2-bit), significantly reducing memory footprint and improving inference efficiency. However, they may suffer from performance degradation under aggressive quantization or cross-modal inputs.

Cache pruning methods, including H$_2$O~\cite{DBLP:conf/nips/Zhang00CZC0TRBW23}, TOVA~\cite{DBLP:conf/emnlp/OrenHNA024}, PyramidKV~\cite{cai2025pyramidkvdynamickvcache}, and AdaKV~\cite{feng2025adakvoptimizingkvcache}, aim to reduce KV cache size by eliminating less informative tokens or attention heads, but may mistakenly discard contextually crucial information, especially under long-range dependencies.

\begin{figure}[t]
    \centering
    \includegraphics[width=1.0\linewidth]{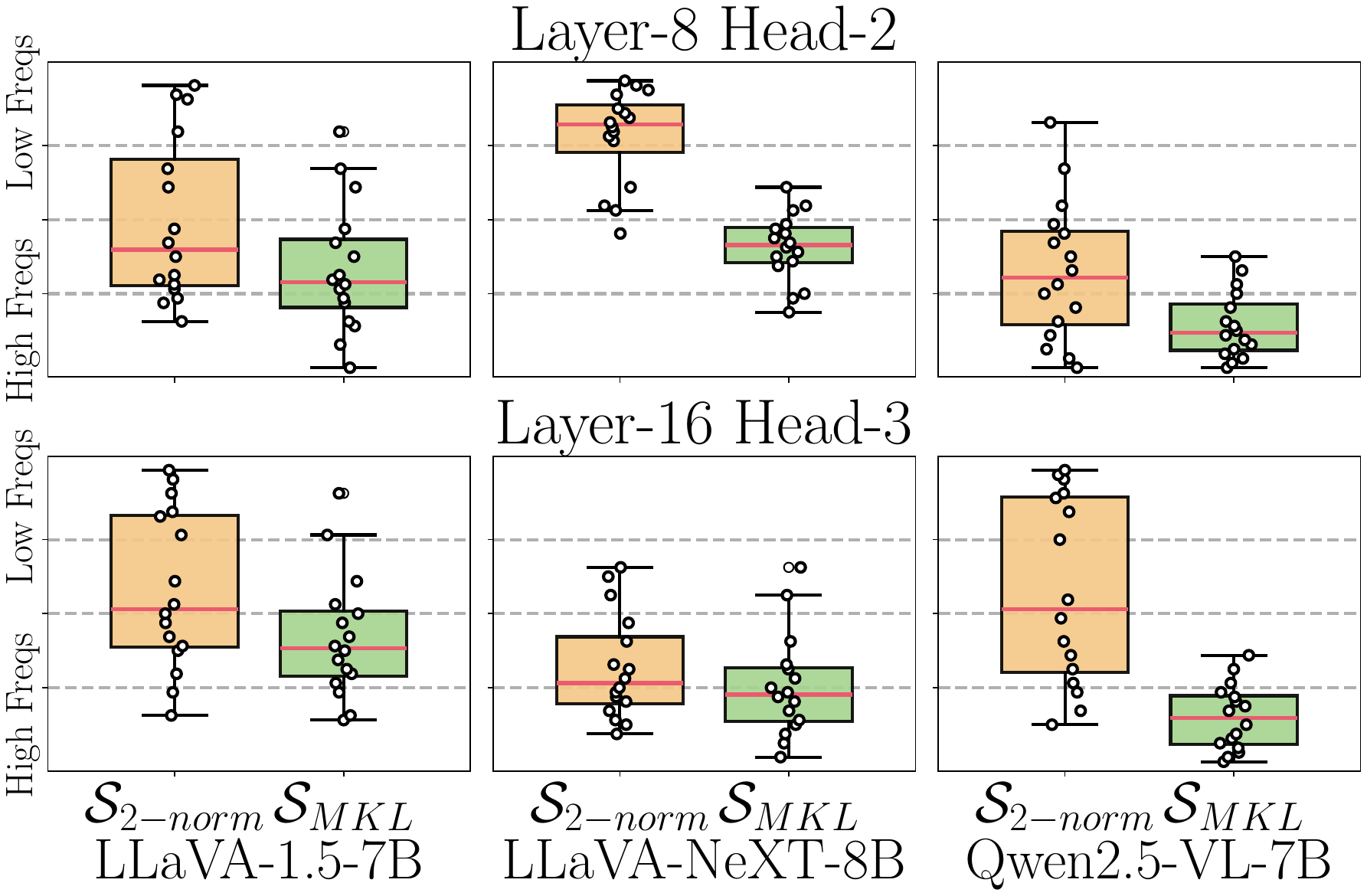}
    \caption{Comparison of multimodal partial-rope selection between $\mathcal{S}_{\text{2-norm}}$ and $\mathcal{S}_{\text{MKL}}$.}
    \label{fig:rank_version}
    
\end{figure}
\section{Conclusion}

In this work, we present \textbf{MHA2MLA-VLM}, a parameter-efficient and multimodal-aware framework for adapting VLMs to DeepSeek's MLA architecture. 
Our approach introduces modality-adaptive partial-RoPE and modality-decoupled low-rank approximation, enabling substantial KV cache reduction with minimal fine-tuning across various models.
Extensive experiments demonstrate that MHA2MLA-VLM achieves efficient inference while maintaining original model performance, offering a practical solution for scalable multimodal applications.

\section*{Acknowledgements}
The authors wish to thank the anonymous reviewers for their helpful comments. 
This work was partially funded by the Major Key Project of PCL under Grant PCL2024A06, National Natural Science Foundation of China (No.62576106, 62376061, 62476061, 62506079), Shanghai Rising-Star Program (23QA1400200), Natural Science Foundation of Shanghai (23ZR1403500) and Fudan Kunpeng \& Ascend Center of Cultivation. The computations in this research were performed using the CFFF platform of Fudan University.
\bibliography{aaai2026}

\clearpage
\appendix

\section{Appendix}
\label{sec:appendix}

\subsection{Experimental Setups}

\subsubsection{Models}

To validate our method's effectiveness across diverse VLM architectures, we evaluate three representative and widely-used models: LLaVA-1.5, LLaVA-NeXT, and Qwen2.5-VL-Instruct. These VLMs encompass key variations in attention mechanisms (MHA/GQA) and PE (vanilla RoPE and M-RoPE). 
Specifically,
LLaVA-1.5 adopts standard MHA, while LLaVA-NeXT and Qwen2.5-VL employ GQA (e.g., KV groups = 4 for LLaVA-NeXT and 7 for Qwen2.5-VL). Moreover, compared with LLaVA series VLMs (Vanilla RoPE), Qwen2.5-VL applies the M-RoPE.

For the two LLaVA variants, we adopt the community Hugging Face checkpoints (HF versions: \texttt{llava-1.5-7b-hf} and \texttt{llama3-llava-next-8b-hf}) rather than the original author releases, so that all baselines, including Qwen2.5-VL, share an identical training, inference, and evaluation pipeline. 
This choice offers greater compatibility and reproducibility within the Hugging Face ecosystem.

\subsubsection{Dataset} 
In order to implement MHA2MLA architecture migration in VLM, we use the data used to train the original model as much as possible.
We chose the LLaVA-series visual instruction fine-tuning datasets for LLaVA-1.5 Dataset and LLaVA-NeXT Dataset, because these instruction tuning data are open-source, which can minimize the gap in fine-tuning data and processes. 
We chose Qwen2.5-VL because it is one of the widely used open-source VLMs (but its pretraining and instruction tuning data are not open-source, there is a potential gap in finetuning).

Specifically, The LLaVA-1.5 dataset consists of approximately 665K samples, is constructed from vision-language pairs sourced from image captioning datasets, with high-quality multi-turn instructions generated to support open-ended vision-language tasks. 
The LLaVA-NeXT Dataset further expands the scale to 778K samples and incorporates more diverse and user-aligned instruction-following data.

In the fine-tuning of MHA2MLA-VLM, LLaVA-1.5 uses its own default data, while LLaVA-NeXT and Qwen2.5-VL are fine-tuned on the publicly released LLaVA-NeXT dataset. 
Specifically, For the LLaVA-1.5 and Qwen2.5-VL models, were trained on approximately 0.5B multimodal tokens, which contains image and text tokens.
LLaVA-NeXT uses a dynamic high-resolution strategy to achieve multiple images, the number of fine-tuned tokens is approximately 1.8B. 
In summary, compared to existing models training from scratch that rely on trillions of training tokens~\cite{bai2025qwen25vltechnicalreport}, our approach enables the migration from MHA to MLA architectures using only within the open-sourced instruction dataset. 

\subsubsection{PEFT Training Strategy}
To reduce the cost of MHA2MLA-VLM adaptation, we introduce parameter-efficient fine-tuning (PEFT), which contains two stages.
During the multimodal partial-rope phase (stage 1), only the two projection matrices for query and key are fine-tuned, while all other parameters are frozen. 
For the low-rank approximation phase (stage 2), only the parameters within MLA are fine-tuned. 
Specifically, for Qwen2.5-VL, our method only fine-tuning and approximately $\sim\! 6 \% $ and $\sim\! 10\% $ of the original model parameters in stage 1 and stage 2, respectively. It reduces the time required by 59\% (e.g., the MHA2MLA-VLM of Qwen2.5-VL is shortened from 22 hours to 9 hours). 



\subsubsection{HyperParameters}
As illustrated in Table~\ref{tab:s1_hp} and Table~\ref{tab:s2_hp}, we report the hyperparameters of all models we used for both stage 1 and stage 2 fine-tuning, with multimodal partial rope dim is $\frac{d_h}{4}$ as the default configuration.

\begin{table*}[t]
\centering
\small
\setlength\tabcolsep{3pt}
\begin{tabular}{l@{}lccc}
  \toprule
  \multicolumn{2}{l}{\textbf{Metrics}} & \textbf{LLaVA-1.5-7B} & \textbf{LLaVA-NeXT-8B} & \textbf{Qwen2.5-VL-7B} \\
  \midrule
  \multicolumn{2}{l}{n\_batch $\times$ n\_gpu} & 16$\times$8 &  16$\times$8 &  16$\times$8  \\
  \multicolumn{2}{l}{Learning Rate} & 5e-5 & 5e-6 & 1e-5 \\
  \multicolumn{2}{l}{Hardware} & NVIDIA A800 & NVIDIA A800 & NVIDIA A800 \\
  \multicolumn{2}{l}{Steps} & 5197 & 6084 & 6084 \\
  \multicolumn{2}{l}{Tokens} & 0.5B & 1.8B & 0.5B \\
  \multicolumn{2}{l}{Warmup ratio} & 10\% & 10\% & 10\% \\
  \multicolumn{2}{l}{Decay} & 10\% & 10\% & 10\% \\
  \multicolumn{2}{l}{Time} & 5.5 hours & 15 hours & 9 hours \\
  \multicolumn{2}{l}{Seqlen} & 2048 & 4096 & 4096 \\
  \multirow{1}{*}{\#Param.} & & 1497.64M (21.20\%) & 793.00M (9.49\%) & 494.48M(5.96\%) \\
  \arrayrulecolor{black}
  \bottomrule
\end{tabular}
\caption{Training configurations and model Parameter summary across model for stage 1.}
\label{tab:s1_hp}
\end{table*}

\begin{table*}[t]
\centering
\small
\setlength\tabcolsep{3pt}
\begin{tabular}{l@{}lccc}
  \toprule
  \multicolumn{2}{l}{\textbf{Metrics}} & \textbf{LLaVA-1.5-7B} & \textbf{LLaVA-NeXT-8B} & \textbf{Qwen2.5-VL-7B} \\
  \midrule
  \multicolumn{2}{l}{n\_batch $\times$ n\_gpu} &  16$\times$8 &  16$\times$8 &  16$\times$8  \\
  \multirow{4}{*}{Learning Rate} & $d_{kv}=16$ & 1e-5 & - & - \\
  & $d_{kv}=32$ & 5e-6 & 5e-6 & 5e-5 \\
  & $d_{kv}=64$ & 1e-6 & 1e-6 & 1e-5 \\ 
  & $d_{kv}=128$ & - & 8e-7 & 5e-6 \\
  \multicolumn{2}{l}{Hardware} & NVIDIA A800 & NVIDIA A800 & NVIDIA A800 \\
  \multicolumn{2}{l}{Steps} & 5197 & 6084 & 6084 \\
  \multicolumn{2}{l}{Tokens} & 0.5B & 1.8B & 0.5B \\
  \multicolumn{2}{l}{Warmup ratio} & 10\% & 10\% & 10\% \\
  \multicolumn{2}{l}{Decay} & 10\% & 10\% & 10\% \\
  \multicolumn{2}{l}{Seqlen} & 2048 & 4096 & 4096 \\
  \multirow{4}{*}{\#Param.} & $d_{kv}=16$ & 1598.30M (24.62\%) & - & - \\
  & $d_{kv}=32$ & 1967.40M (28.67\%) & 1225.02M (14.91\%) & 809.21M (9.83\%) \\
  & $d_{kv}=64$ & 2705.60M (35.60\%) & 1321.48M (15.90\%) & 841.32M (10.18\%) \\ 
  & $d_{kv}=128$ & - & 1514.42M (17.80\%) & 905.55M (10.87\%) \\
  \arrayrulecolor{black}
  \bottomrule
\end{tabular}
\caption{Training configurations and model parameter summary across models for stage 2.}
\label{tab:s2_hp}
\end{table*}

\subsubsection{Evaluation Setups}

\paragraph{Benchmarks}
To comprehensively assess the capabilities of MHA2ML VLMs or baselines, we adopt eight widely-used benchmarks that span diagram reasoning, general visual QA, object hallucination, scene understanding, real-world images, multi-modal comprehension, chart reasoning, and document understanding—namely including AI2D~\cite{DBLP:conf/eccv/KembhaviSKSHF16}, GQA~\cite{DBLP:conf/cvpr/HudsonM19}, POPE~\cite{DBLP:conf/emnlp/LiDZWZW23}, SEED-Bench-IMG~\cite{li2023seedbenchbenchmarkingmultimodalllms}, RealWorldQA~\cite{DBLP:conf/iclr/0004ZTFZWLWW00025}, MMBench~\cite{DBLP:conf/eccv/LiuDZLZZYWHLCL24}, ChartQA~\cite{DBLP:conf/acl/MasryLTJH22}, DocVQA~\cite{DBLP:conf/wacv/MathewKJ21}.

\paragraph{Model-specific availability} For LLaVA-1.5, we follow its original training and evaluation protocol and therefore exclude ChartQA and DocVQA. The model’s pre-training and fine-tuning corpus lack chart and document data, making direct evaluation on these two tasks ill-posed and unfair. The other two models (LLaVA-NeXT and Qwen2.5-VL) are evaluated on the full set of eight benchmarks.

In summary, the types of VLMs we evaluated are as follows:

\begin{itemize}[leftmargin=*,itemsep=0pt, topsep=0pt, parsep=0pt]
    \item The officially released checkpoints from LLaVA-1.5, LLaVA-NeXT and Qwen2.5-VL.
    \item Original VLMs fine-tuned on the datasets we used.
    \item Our MHA2MLA-VLM models with different $d_{kv}$ settings.
\end{itemize} 

\paragraph{Main Experiments and Ablation Studies}

For the main experiments in ~\Cref{tab:main_result} and ~\Cref{tab:compare_cache}, we used our proposed PEFT training strategy for the baselines and MHA2MLA-VLM.
For the ablation experimentsin ~\Cref{tab:ablation_study}, to isolate the impact of each component for a fair comparison, we fine-tuned all parameters of the models.

\end{document}